\newcommand{\Tau}{\mathcal{T}}
\DeclareMathOperator*{\argmin}{arg\,min}
\newtheorem{thm}{Theorem}
\icmltitlerunning{Stateless Neural Meta-Learning using Second-Order Gradients}
\begin{document}

\twocolumn[
\icmltitle{Stateless Neural Meta-Learning using Second-Order Gradients}



\icmlsetsymbol{equal}{*}

\begin{icmlauthorlist}
\icmlauthor{Mike Huisman}{liacs}\footnotetext{test}
\icmlauthor{Aske Plaat}{liacs}
\icmlauthor{Jan N. van Rijn}{liacs}
\end{icmlauthorlist}

\icmlaffiliation{liacs}{Leiden Institute of Advanced Computer Science, Leiden University, Leiden, The Netherlands}

\icmlcorrespondingauthor{Mike Huisman}{m.huisman@liacs.leidenuniv.nl}

\icmlkeywords{Machine Learning, ICML}

\vskip 0.3in
]



\printAffiliationsAndNotice{}  

\begin{abstract}
Deep learning  typically requires large data sets and much compute power for each new problem that is learned.
Meta-learning can be used to learn a good prior that facilitates quick learning, thereby relaxing these requirements so that new tasks can be learned quicker; two popular approaches are MAML and the meta-learner LSTM.
In this work, we compare the two and formally show that the meta-learner LSTM subsumes MAML.
Combining this insight with recent empirical findings, we construct a new algorithm (dubbed TURTLE) which is simpler than the meta-learner LSTM yet more expressive than MAML.
TURTLE outperforms both techniques at few-shot sine wave regression and image classification on miniImageNet and CUB without any additional hyperparameter tuning, at a computational cost that is comparable with second-order MAML.
The key to TURTLE's success lies in the use of second-order gradients, which also significantly increases the performance of the meta-learner LSTM by $1$-$6$\% accuracy.
\end{abstract}

\section{Introduction}
\label{sec:intro}

Humans learn new tasks quickly. 
While deep neural networks have demonstrated human or even super-human performance on various tasks such as image recognition \cite{krizhevsky2012imagenet,he2015delving} and
game-playing \cite{mnih2015human, silver2016mastering},
learning a new task is generally slow and requires large amounts of data \cite{lecun2015deep}.
This limits their applicability in real-world domains where few data and limited computational resources are available. 

\textit{Meta-learning} \cite{schmidhuber1987evolutionary,schaul2010metalearning} is one approach to address this issue. 
The idea is to learn at two different levels of abstraction: at the \textit{outer-level} (across tasks), we learn a prior that facilitates faster learning at the \textit{inner-level} (single task) \cite{vilalta2002perspective, vanschoren2018meta, hospedales2020meta, huisman2021}. 
The prior that we learn at the outer-level can take on many different forms, such as the learning rule \cite{andrychowicz2016learning, ravi2017optimization} and the weight initialization \cite{nichol2018reptile, finn2017model}. 

MAML \cite{finn2017model} and the meta-learner LSTM \cite{ravi2017optimization} are two well-known techniques that focus on these two types of priors. 
More specifically, MAML aims to learn a good weight initialization from which it can learn new tasks quickly using regular gradient descent. 
In addition to learning a good weight initialization, the meta-learner LSTM \cite{ravi2017optimization}  attempts to learn the optimization procedure in the form of a separate LSTM network. 
The meta-learner LSTM is more general than MAML in the sense that the LSTM can learn to perform gradient descent (see \autoref{sec:turtle}) or something better.

This suggests that the performance of MAML can be mimicked by the meta-learner LSTM on few-shot image classification.
However, our experimental results and those by \citet{finn2017model} show that this is not necessarily the case.
The meta-learner LSTM fails to find a solution in the \textit{meta-landscape} that learns  as well as gradient descent. 

To improve upon the meta-learner LSTM, we introduce TURTLE, which uses a fully-connected feed-forward network as an optimizer---a meta-network that is simpler than an LSTM. 
And, because it uses a meta-network, TURTLE is more expressive than MAML, as the meta-network can learn to perform gradient descent.
We empirically demonstrate that TURTLE outperforms both of these techniques at few-shot sine wave regression and, without additional hyperparameter tuning, exceeds their performance in various settings involving the commonly used miniImageNet \citep{vinyals2016matching} and CUB \cite{wah2011caltech} benchmarks.
Our contributions are:
\begin{itemize}
    \item We formally show that the meta-learner LSTM subsumes MAML.
    \item We formulate a new meta-learning algorithm called TURTLE which uses a simpler meta-network than the meta-learner LSTM and is more expressive than MAML. 
    \item We demonstrate that TURTLE outperforms both MAML and the meta-learner LSTM on sine wave regression, and various settings involving miniImageNet and CUB by at least $1$\% accuracy without any additional hyperparameter tuning. TURTLE requires roughly the same amount of computation time as second-order MAML. 
    \item Based on the results of TURTLE, we enhance the meta-learner LSTM by using raw gradients as meta-learner input and second-order information and show these changes result in a performance boost of $1$-$6$\% accuracy.
\end{itemize}

\section{Related work}
\label{sec:background}

The success of deep learning techniques has been largely limited to domains where abundant data and large compute resources are available \cite{lecun2015deep}. 
The reason for this is that learning a new task requires large amounts of resources.
Meta-learning is an approach that holds the promise of relaxing these requirements by learning to learn.
The field has attracted much attention in recent years.

One popular technique is MAML \cite{finn2017model} which aims to find a good weight initialization from which new tasks can be learned quickly within several gradient update steps. 
Many works build upon the key idea of MAML, for example, to decrease the computational costs \cite{nichol2018reptile, rajeswaran2019meta}, increase the applicability to online and active learning settings \cite{grant2018recasting, finn2018probabilistic}, or increase the expressivity of the algorithm \cite{li2017metasgd, park2019meta, lee2018gradient}. 
Despite its popularity, MAML does no longer yield state-of-the-art performance on few-shot learning benchmarks \cite{lu2020learning}, as it is surpassed by, for example, latent embedding optimization (LEO) \cite{rusu2018meta} which optimizes the initial weights in a lower-dimensional latent space, and MetaOptNet \cite{lee2019meta}, which stacks a convex model on top of the meta-learned initialization of a high-dimensional feature extractor.
However, although these approaches achieve state-of-the-art techniques on few-shot benchmarks, MAML is more elegant and more generally applicable as it can also be used in reinforcement learning settings \cite{finn2017model}. 

While  the meta-learner LSTM \cite{ravi2017optimization} learns both an initialization and an optimization procedure, it is generally hard to properly train the optimizer \citep{metz2019understanding}.
As a result, techniques that use hand-crafted learning rules instead of trainable optimizers may yield better performance.    
It is perhaps for this reason that most meta-learning algorithms use simple, hand-crafted optimization procedures to learn new tasks, such as regular gradient descent \cite{bottou2004}, Adam \cite{kingma2015adam}, or RMSprop \cite{tieleman2017divide}.
\citet{andrychowicz2016learning}, however, show that learned optimizers may learn faster and yield better performance than gradient descent. 

The goal of our work is to show that, despite the practical difficulties of the meta-learner LSTM, MAML \emph{can} be outperformed by learning a separate meta-network.\footnote{Our code can be found at: \url{https://github.com/mikehuisman/revisiting-learned-optimizers}} 
Our technique, dubbed TURTLE, replaces the LSTM module from the meta-learner LSTM with a feed-forward neural network.

Note that \citet{metz2019understanding} also used a regular feed-forward network as an optimizer.
However, they were mainly concerned with understanding and correcting the difficulties that arise from training an optimizer and do not learn a weight initialization for the base-learner network as we do. 
\citet{baik2020meta} also use a feed-forward network on top of MAML but its goal is to generate a per-step learning rate and weight decay coefficients.
The feed-forward network in TURTLE, in contrast, generates direct weight updates.

\section{Preliminaries}\label{sec:prelims}

In this section, we explain the notation and the concepts of the works that we build upon.

\subsection{Few-shot learning}\label{sec:fewshotlearning}
In the context of supervised learning, the few-shot setup is commonly used as a testbed for meta-learning algorithms \cite{vinyals2016matching, finn2017model, nichol2018reptile, ravi2017optimization}. 
One reason for this is the fact that tasks $\Tau_j$ are small, which makes learning a prior \textit{across} tasks not overly expensive. 

Every task $\Tau_j$ consists of a support (training) set $D^{tr}_{\Tau_j}$ and query (test) set $D^{te}_{\Tau_j}$ \cite{vinyals2016matching, lu2020learning, ravi2017optimization}. 
When a model is presented with a new task, it tries to learn the associated concepts from the support set. 
The success of this learning process is then evaluated on the query set.
Naturally, this means that the query set contains concepts that were present in the support set. 

In classification settings, a commonly used instantiation of the few-shot setup is called $N$-way $k$-shot learning \cite{finn2017model, vinyals2016matching}.
Here, given a task $\Tau_j$, every support set contains $k$ examples for each of the $N$ distinct classes. 
Moreover, the query set must contain examples from one of these $N$ classes.

Suppose we have a dataset $D$ from which we can extract $J$ tasks. 
For meta-learning purposes, we split these tasks into three non-overlapping partitions: (i)~meta-training, (ii)~meta-validation, and (iii)~meta-test tasks \cite{ravi2017optimization, sun2019meta}. 
These partitions are used for training the meta-learning algorithm, hyperparameter tuning, and evaluation, respectively.
Note that non-overlapping means that every partition is assigned some class labels which are unique to that partition.  

\subsection{MAML}

As mentioned before, MAML \cite{finn2017model} attempts to learn a set of initial neural network parameters $\boldsymbol{\theta}$ from which we can quickly learn new tasks within $T$ steps of gradient descent, for a small value of $T$.
Thus, given a task $\Tau_j = (D^{tr}_{\Tau_j}, D^{te}_{\Tau_j})$, MAML will produce a sequence of weights $(\boldsymbol{\theta}_j^{(0)}, \boldsymbol{\theta}^{(1)}_j, \boldsymbol{\theta}^{(2)}_j, ..., \boldsymbol{\theta}^{(T)}_j)$, where 
\begin{align}
    \boldsymbol{\theta}^{(t+1)}_j = \boldsymbol{\theta}_j^{(t)} - \alpha \nabla_{\boldsymbol{\theta}^{(t)}_j} \mathcal{L}_{D^{tr}_{\Tau_j}}(\boldsymbol{\theta}^{(t)}_j). \label{eq:graddescent}
\end{align}
Here, $\alpha$ is the inner learning rate and $\mathcal{L}_{D}(\boldsymbol{\varphi})$ the loss of the network with weights $\boldsymbol{\varphi}$ on dataset $D$.   
Note that the first set of weights in the sequence is equal to the initialization, i.e., $\boldsymbol{\theta}_j^{(0)} = \boldsymbol{\theta}$.

Given a distribution of tasks $p(\Tau)$, we can formalize the objective of MAML as finding the initial parameters
\begin{align}
    \argmin_{\boldsymbol{\theta}} \mathbb{E}_{\Tau_j \backsim p(\Tau)} \left[  \mathcal{L}_{D^{te}_{\Tau_j}}(\boldsymbol{\theta}^{(T)}_j) \right]. \label{eq:mamlobj}   
\end{align}
Note that the loss is taken with respect to the query set, whereas $\boldsymbol{\theta}^{(T)}_j$ is computed on the support set $D^{tr}_{\Tau_J}$.

The initialization parameters $\boldsymbol{\theta}$ are updated by optimizing this objective in \autoref{eq:mamlobj}, where the expectation over tasks is approximated by sampling a batch of tasks.
Importantly, updating these initial parameters requires  backpropagation through the optimization trajectories on the tasks from the batch.
This implies the computation of second-order derivatives, which is computationally expensive. 
However, \citet{finn2017model} have shown that first-order MAML, which ignores these higher-order derivatives and is computationally less demanding, works just as well as the complete, second-order MAML version.

\subsection{Meta-learner LSTM}
\label{sec:metalstm}

The meta-learner LSTM by \citet{ravi2017optimization} can be seen as an extension of MAML as it does not only learn the initial parameters $\boldsymbol{\theta}$ but also the optimization procedure which is used to learn a given task. 
Note that MAML only uses a single base-learner network, while the meta-learner LSTM uses a separate meta-network to update the base-learner parameters. 
Thus, instead of computing $(\boldsymbol{\theta}_j^{(0)}, \boldsymbol{\theta}^{(1)}_j, \boldsymbol{\theta}^{(2)}_j, ..., \boldsymbol{\theta}^{(T)}_j)$ using regular gradient descent as done by MAML, the meta-learner LSTM learns a procedure that can produce such a sequence of updates, using a separate meta-network. 

This trainable optimizer takes the form of a special LSTM module, which is applied to every weight in the base-learner network after the gradients and loss are computed on the support set. 
The idea is to embed the base-learner weights into the cell state $\boldsymbol{c}$ of the LSTM module. 
Thus, for a given task $\mathcal{T}_j$, we start with cell state $\boldsymbol{c}_j^{(0)} = \boldsymbol{\theta}$. 
After this initialization phase, the base-learner parameters (which are now inside the cell state) are updated as
\begin{align}
    &\boldsymbol{c}_j^{(t+1)} = \nonumber \\ 
    &\sigma \left( \boldsymbol{W}_{\boldsymbol{f}} \cdot [\nabla_{\boldsymbol{\theta}^{(t)}_j}, \mathcal{L}_{D^{tr}_{\mathcal{T}_j}}, \boldsymbol{\theta}^{(t)}_j, \boldsymbol{f}^{(t-1)}_j ] + \boldsymbol{b}_{\boldsymbol{f}}  \right)  \odot \boldsymbol{c}_j^{(t)} \nonumber\\ 
    & +\sigma \left( \boldsymbol{W}_{\boldsymbol{i}} \cdot [\nabla_{\boldsymbol{\theta}^{(t)}_j}, \mathcal{L}_{D^{tr}_{\mathcal{T}_j}}, \boldsymbol{\theta}^{(t)}_j, \boldsymbol{i}^{(t-1)}_j ] + \boldsymbol{b}_{\boldsymbol{i}}  \right) \odot \bar{\boldsymbol{c}}^{(t)}_j,  \label{eq:lstmupdate}
\end{align}
where $\odot$ is the element-wise product, the two sigmoid factors $\sigma$ are the parameterized forget gate $\boldsymbol{f}^{(t)}_j$ and learning rate $\boldsymbol{i}^{(t)}_j$ vectors that steer the learning process, $\nabla_{\boldsymbol{\theta}^{(t)}_j} = \nabla_{\boldsymbol{\theta}^{(t)}_j}\mathcal{L}_{D^{tr}_{\mathcal{T}_j}}(\boldsymbol{\theta}^{(t)}_j)$, $\mathcal{L}_{D^{tr}_{\mathcal{T}_j}}=\mathcal{L}_{D^{tr}_{\mathcal{T}_j}}(\boldsymbol{\theta}^{(t)}_j)$, and $ \bar{\boldsymbol{c}}^{(t)}_j = -\nabla_{\boldsymbol{\theta}_j^{(t)}} \mathcal{L}_{D^{tr}_{\mathcal{T}_j}}(\boldsymbol{\theta}^{(t)}_j)$.
Both the learning rate and forget gate vectors are parameterized by weight matrices $\boldsymbol{W}_f, \boldsymbol{W}_i$ and bias vectors $\boldsymbol{b}_{\boldsymbol{f}}$ and $\boldsymbol{b}_{\boldsymbol{i}}$, respectively.
These parameters steer the inner learning on tasks and are updated using regular, hand-crafted optimizers after every meta-training task. 
As noted by \citet{ravi2017optimization}, this is equivalent to gradient descent when $\boldsymbol{c}^{(t)} = \boldsymbol{\theta}^{(t)}_j$, and the sigmoidal factors are equal to $\boldsymbol{1}$ and $\boldsymbol{\alpha}$, respectively. 

In spite of the fact that the LSTM module is applied to every weight individually to produce updates, it does maintain a separate hidden state for each of them. 
In a similar fashion to MAML, updating the initialization parameters (and LSTM parameters) would require propagating backwards through the optimization trajectory for each task.
To circumvent the computational costs associated with this expensive operation, the meta-learner LSTM assumes that input gradients and losses are \textit{independent} of the parameters in the LSTM.

\section{Towards stateless neural meta-learning}\label{sec:turtle}
In this section, we study the theoretical relationship between MAML and the meta-learner LSTM.
Based on the resulting insight, we formulate a new meta-learning algorithm called TURTLE (s\underline{t}ateless ne\underline{ur}al me\underline{t}a-\underline{le}arning) which is simpler than the meta-learner LSTM and more expressive than MAML.

\subsection{Theoretical relationship}
There is an obvious relationship between MAML and the meta-learner LSTM.
Instead of being restricted to using gradient descent, the meta-learner LSTM uses a meta-network that can learn the optimization procedure to learn new tasks.
This observation leads to  the following theorem: 

\begin{thm}
The meta-learner LSTM subsumes MAML
\end{thm}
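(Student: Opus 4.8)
The plan is to give a \emph{constructive} proof. For any instance of MAML --- specified by an initialization $\boldsymbol{\theta}$, an inner learning rate $\alpha$, and a horizon $T$ --- I will exhibit a choice of meta-learner LSTM parameters that produces, on every task $\Tau_j$, exactly the sequence of base-learner weights $(\boldsymbol{\theta}_j^{(0)}, \ldots, \boldsymbol{\theta}_j^{(T)})$ generated by \autoref{eq:graddescent}, and hence the same value of the meta-objective of \autoref{eq:mamlobj}. This shows that the hypothesis space searched by the meta-learner LSTM contains that of MAML, which is the intended meaning of ``subsumes'': the LSTM's optimum is at least as good as MAML's.

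First I would fix the LSTM configuration, following the reduction already noted after \autoref{eq:lstmupdate}. Initialize the cell state to the MAML initialization, $\boldsymbol{c}_j^{(0)} = \boldsymbol{\theta}$, which matches $\boldsymbol{\theta}_j^{(0)} = \boldsymbol{\theta}$. Set both gate weight matrices to zero, $\boldsymbol{W}_{\boldsymbol{f}} = \boldsymbol{W}_{\boldsymbol{i}} = \boldsymbol{0}$, so the gates ignore the gradient, the loss, the current weights, and the fed-back hidden states $\boldsymbol{f}_j^{(t-1)}, \boldsymbol{i}_j^{(t-1)}$. Take the input-gate bias $\boldsymbol{b}_{\boldsymbol{i}} = \sigma^{-1}(\alpha)\,\boldsymbol{1} = \ln\!\bigl(\tfrac{\alpha}{1-\alpha}\bigr)\boldsymbol{1}$ (legitimate in the usual regime $0 < \alpha < 1$), so the ``learning rate'' vector equals $\alpha\boldsymbol{1}$, and the forget-gate bias $\boldsymbol{b}_{\boldsymbol{f}} = B\,\boldsymbol{1}$ with $B \to \infty$, so the forget gate tends to $\boldsymbol{1}$. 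Substituting into \autoref{eq:lstmupdate} and using $\bar{\boldsymbol{c}}^{(t)}_j = -\nabla_{\boldsymbol{\theta}_j^{(t)}} \mathcal{L}_{D^{tr}_{\Tau_j}}(\boldsymbol{\theta}^{(t)}_j)$, the update collapses to $\boldsymbol{c}_j^{(t+1)} = \boldsymbol{c}_j^{(t)} - \alpha \nabla_{\boldsymbol{\theta}_j^{(t)}} \mathcal{L}_{D^{tr}_{\Tau_j}}(\boldsymbol{\theta}^{(t)}_j)$, which is exactly \autoref{eq:graddescent}.

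Next I would close the argument by induction on $t$: the identity $\boldsymbol{c}_j^{(t)} = \boldsymbol{\theta}_j^{(t)}$ holds at $t = 0$ by the choice of initialization, and if it holds at step $t$ then the gradient and loss fed to the LSTM coincide with those MAML uses at $\boldsymbol{\theta}_j^{(t)}$, so the collapsed update gives $\boldsymbol{c}_j^{(t+1)} = \boldsymbol{\theta}_j^{(t+1)}$. After $T$ steps the base-learners coincide, so the query losses $\mathcal{L}_{D^{te}_{\Tau_j}}(\boldsymbol{\theta}_j^{(T)})$ agree task by task, hence so does the expectation in \autoref{eq:mamlobj}. Therefore every MAML solution is realized (in the limit) by a meta-learner LSTM, and the minimum of the meta-learner LSTM objective is at most that of MAML.

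The main obstacle is cosmetic rather than deep: the sigmoidal forget gate lies in the open interval $(0,1)$, so it can never equal $\boldsymbol{1}$ exactly with finite parameters. I would handle this with the limiting argument above --- as $B \to \infty$ the gate converges to $\boldsymbol{1}$ and, since only finitely many continuous inner updates are composed, the trajectory converges to MAML's --- equivalently stating the result as ``the meta-learner LSTM can match MAML to within arbitrarily small error.'' A secondary point worth one sentence is verifying that zeroing $\boldsymbol{W}_{\boldsymbol{f}}$ and $\boldsymbol{W}_{\boldsymbol{i}}$ does neutralize the recurrent feedback of $\boldsymbol{f}_j^{(t-1)}$ and $\boldsymbol{i}_j^{(t-1)}$ into the gates, and a remark that the rare case $\alpha \ge 1$ only needs a trivial reparameterization.
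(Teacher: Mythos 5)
Your proposal is correct and takes essentially the same route as the paper's proof: both argue by induction on the inner-loop step, set $\boldsymbol{W}_{\boldsymbol{f}}$ and $\boldsymbol{W}_{\boldsymbol{i}}$ to zero, and choose the biases so that the forget gate is (approximately) $\boldsymbol{1}$ and the input gate equals $\alpha$, collapsing \autoref{eq:lstmupdate} to \autoref{eq:graddescent}. If anything, your explicit limiting argument for the forget gate is slightly more careful than the paper's appeal to a ``sufficiently large'' bias, which only yields approximate equality of the sigmoid with $\boldsymbol{1}$.
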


\begin{proof}
We prove this theorem through mathematical induction on the updates made by MAML and the meta-learner LSTM. 

Suppose that both MAML and the meta-learner LSTM attempt to learn an arbitrary task $\mathcal{T}_j$.
Without loss of generality, let us furthermore assume that both techniques are allowed to make $T \in \mathbb{N}$ updates on the support set $D^{tr}_{\mathcal{T}_j}$ and that MAML does this using gradient descent with a learning rate denoted by $\alpha$.  

To satisfy the base case, we can simply assume that MAML and the meta-learner LSTM start with the same initial parameters $\boldsymbol{\theta}$, i.e., $\boldsymbol{c}^{(0)}_j = \boldsymbol{\theta}^{(0)}_j = \boldsymbol{\theta}$. 

To show that the meta-learner LSTM could learn gradient descent as inner learning procedure, and thereby complete the proof, we have to show for $t = 0,...,T-1$ that $\boldsymbol{c}^{(t+1)}_j = \boldsymbol{\theta}^{(t+1)}_j$.
Substituting both the right- and left-hand sides using \autoref{eq:graddescent} and \autoref{eq:lstmupdate}, this means that we have to show the possibility that
\begin{align}
    \boldsymbol{f}^{(t)}_j \odot \boldsymbol{c}_j^{(t)} + \boldsymbol{i}^{(t)}_j \odot \bar{\boldsymbol{c}}^{(t)}_j &= \boldsymbol{\theta}_j^{(t)} - \alpha \nabla_{\boldsymbol{\theta}^{(t)}_j} \mathcal{L}_{D^{tr}_{\Tau_j}}(\boldsymbol{\theta}^{(t)}_j),
\end{align}
where we used the shorthands 
\begin{align}
    \boldsymbol{f}_j^{(t)} = \sigma \left( \boldsymbol{W}_{\boldsymbol{f}} \cdot [\nabla_{\boldsymbol{\theta}^{(t)}_j}, \mathcal{L}_{D^{tr}_{\mathcal{T}_j}}, \boldsymbol{\theta}^{(t)}_j, \boldsymbol{f}^{(t-1)} ] + \boldsymbol{b}_{\boldsymbol{f}}  \right), \label{eq:deff}\\
    \boldsymbol{i}_j^{(t)} = \sigma \left( \boldsymbol{W}_{\boldsymbol{i}} \cdot [\nabla_{\boldsymbol{\theta}^{(t)}_j}, \mathcal{L}_{D^{tr}_{\mathcal{T}_j}}, \boldsymbol{\theta}^{(t)}_j, \boldsymbol{i}^{(t-1)} ] + \boldsymbol{b}_{\boldsymbol{i}}  \right).
\end{align}

By design, we have that $\bar{\boldsymbol{c}}^{(t)}_j = -\nabla_{\boldsymbol{\theta}^{(t)}_j} \mathcal{L}_{D^{tr}_{\Tau_j}}(\boldsymbol{\theta}^{(t)}_j)$. 
Moreover, by our inductive hypothesis, we also know that $\bar{\boldsymbol{c}}^{(t)}_j = \boldsymbol{\theta}_j^{(t)}$.
Thus, it remains to be shown that it is possible to have $\boldsymbol{f}^{(t)}_j = \boldsymbol{1}$ and $\boldsymbol{i}^{(t)}_j = \boldsymbol{a}$. 

Starting with the former, we have that
\begin{align}
    \boldsymbol{1} &= \boldsymbol{f}^{(t)}_j \\
    &=  \sigma \left( \boldsymbol{W}_{\boldsymbol{f}} \cdot [\nabla_{\boldsymbol{\theta}^{(t)}_j}, \mathcal{L}_{D^{tr}_{\mathcal{T}_j}}, \boldsymbol{\theta}^{(t)}_j, \boldsymbol{f}^{(t-1)} ] + \boldsymbol{b}_{\boldsymbol{f}}  \right) \label{eq:step1}\\
    &= \sigma \left( \boldsymbol{b}_{\boldsymbol{f}}  \right) \label{eq:step2} \\
    &=\left[ \frac{1}{1+e^{-b_{\boldsymbol{f}}^{(1)}}} \,\,...\,\, \frac{1}{1+e^{-b_{\boldsymbol{f}}^{(n)}}}   \right].\label{eq:step3}
\end{align}
Here, \autoref{eq:step1} is due to \autoref{eq:deff} and \autoref{eq:step2} by assuming that $\boldsymbol{W}_{\boldsymbol{f}}$ is a matrix of zeros, which we can do as our only task is to show the mere possibility of a parameterization of $\boldsymbol{W}_{\boldsymbol{f}}$ and $\boldsymbol{b}_{\boldsymbol{f}}$ that yields similar behavior as gradient descent.
\autoref{eq:step3} follows from the definition of the sigmoidal function.
In the last expression, $n$ denotes the number of base-learner parameters.
It is easy to see that this equation holds for sufficiently large $b_{\boldsymbol{f}}^{(m)}$ for $m \in \{ 1,...,n \}$.

Lastly, we show the possibility that $\boldsymbol{i}^{(t)}_j = \boldsymbol{a}$.
Leveraging the symmetry between the definitions of $\boldsymbol{i}^{(t)}_j$ and $\boldsymbol{f}^{(t)}_j$, we use the above derivation and conclude that we have to show that it is possible to have $\boldsymbol{\alpha} = \sigma (\boldsymbol{b}_{\boldsymbol{i}})$.
It is straightforward to show that this equation is satisfied when
\begin{align}
    b^{(m)}_{\boldsymbol{i}} = -\ln \left( \frac{1 -\alpha }{\alpha}  \right)
\end{align}
for $m \in \{ 1,...,n \}$.
Since $0 < \alpha < 1$, this is indeed possible and thus the proof is complete.
\end{proof}

\subsection{Potential problems of the meta-learner LSTM}
The theoretical insight that meta-learner LSTM subsumes MAML is not congruent with empirical findings which show that MAML outperforms the meta-learner LSTM on the miniImageNet image classification benchmark \cite{finn2017model, ravi2017optimization}, indicating that LSTM is unable to successfully navigate the error landscape to find a solution at least as good as the one found by MAML. 

A potential cause  is that the meta-learner LSTM attempts to learn a \textit{stateful} optimization procedure.
That is, the LSTM module has to learn state dynamics that allow for quick learning of new tasks. 
Learning such a stateful procedure requires additional parameters which may increase the complexity of the meta-landscape.  
We conjecture that removing the stateful nature of the trainable optimizer may smoothen the meta-landscape and allow for finding better solutions. 
For this reason, we replace the LSTM module with a regular fully-connected, feed-forward network.

Another potential cause of the underperformance could be the first-order assumption made by the meta-learner LSTM, which we briefly mentioned in \autoref{sec:metalstm}. 
Effectively, this disconnects the computational graph by stating that weight updates made at time step $t$ by the meta-network do not influence the inputs that this network receives at future time steps $t < t' < T$.
It has to be noted that this assumption and its consequences are of a substantially different nature than the assumption made by first-order MAML, which was shown to leave the performance mostly unaffected \citep{finn2017model}. 
More specifically, first-order MAML ignores the optimization trajectory and simply updates the base-learner parameters $\boldsymbol{\theta}$ in the opposite direction of the gradients of the query set loss, i.e., $-\nabla_{\boldsymbol{\theta}^{(T)}_j} \mathcal{L}_{D^{te}_{\mathcal{T}_j}}(\boldsymbol{\theta}^{(T)}_j)$.
When the base-level landscape is reasonably locally smooth, these query set update directions point towards the locally optimal parameters of the task.
In turn, first-order MAML moves the initialization parameters closer to these locally optimal parameters. 
It is thus intuitive that this first-order assumption is mostly harmless.
For the meta-learner LSTM, however, we fail to identify such an intuition that justifies the first-order assumption. 
Moreover, \citet{ravi2017optimization} have not shown an  empirical justification for this assumption. 

\subsection{TURTLE}\label{sec:metanetwork}

In an attempt to make the meta-landscape easier to navigate, we introduce a new algorithm, TURTLE, which trains a feed-forward meta-network to update the base-learner parameters. 
TURTLE is simpler than the meta-learner LSTM as it uses a \textit{stateless} feed-forward neural network as a trainable optimizer, yet more expressive than MAML as its meta-network can learn to perform gradient descent.

The trainable optimizer in TURTLE is thus a fully-connected feed-forward neural network. 
We denote the batch of inputs that this network receives at time step $t$ in the inner loop for task $\mathcal{T}_j$ as $I_j^{(t)} \in \mathbb{R}^{n \times d}$, where $n$ and $d$ are the number of base-learner parameters and the dimensionality of the inputs, respectively. 
The exact inputs that this network receives will be determined empirically, but two choices, inspired by the meta-learner LSTM, are: (i)~the gradients with respect to all parameters and (ii)~the current loss (repeated $n$ times for each parameter in the base-network).  

Moreover, we could mitigate the absence of a state in the meta-network by including a time step $t \in \{ 0,1,...,T-1 \}$ and/or historical information such as a moving average of previous gradients or updates made by the meta-network. 
We denote the latter by $\boldsymbol{h}_j^{(t)}$ which is updated by
\begin{align}
    \boldsymbol{h}_j^{(t+1)} = \beta \boldsymbol{h}_j^{(t)} + (1 - \beta) \boldsymbol{v}_j^{(t)},
\end{align}
where $0 \leq \beta \leq 0$ is a constant that determines the time span over which previous inputs affect the new state $\boldsymbol{h}_j^{(t+1)}$, and $\boldsymbol{v}_j^{(t)} \in \mathbb{R}^n$ is the new information (either the updates or gradients at time step $t$).
When using previous updates, we  initialize $\boldsymbol{h}_j^{(0)}$ by a vector of zeros. 

Weight updates are then computed as follows
\begin{align}
    \boldsymbol{\theta}^{(t+1)}_j = \boldsymbol{\theta}_j^{(t)} + \boldsymbol{\alpha} \odot  g_{\boldsymbol{\phi}}(I_j^{(t)}),   \label{eq:turtledate}
\end{align}
where $\boldsymbol{\alpha} \in \mathbb{R}^n$ is a vector of learning rates per parameter. 
Note that this weight update equation is simpler than the one used by the meta-learner LSTM (see \autoref{eq:lstmupdate}) as our meta-network $g_{\boldsymbol{\phi}}$ is stateless.
Therefore, we do not have parameterized forget and input gates. 
Moreover, the learning rates per parameter in $\boldsymbol{\alpha}$ are not constrained to be within the interval $[0,1]$ as is the case for the meta-learner LSTM due to the use of the sigmoid function.

In \autoref{alg:metalearn} we show, in different colors, the code for MAML (red), the meta-learner LSTM (blue), and TURTLE (green). Although the code structure of the three meta-learners is  similar,  the update rules are quite different.  
Both the base- and meta-learner parameters $\boldsymbol{\theta}$ and $\boldsymbol{\phi}$ are updated by backpropagation through the optimization trajectories (line 11).

\begin{algorithm}[htb]
   \caption{\colorbox{red!30}{MAML}
   \colorbox{blue!30}{meta-learner LSTM}  \colorbox{green!30}{TURTLE}}
   \label{alg:gml}
\begin{algorithmic}[1]
   \STATE Initialize parameters $\boldsymbol{\Theta} =$ 
   \colorbox{red!30}{$\{\boldsymbol{\theta}\}$}
   \colorbox{blue!30}{$\{\boldsymbol{\theta}, \boldsymbol{\phi} \}$}  \colorbox{green!30}{$\{\boldsymbol{\theta}, \boldsymbol{\phi} \}$}
   \STATE Initialize $g_{\boldsymbol{\phi}}$ as 
   \colorbox{red!30}{N.A.}
   \colorbox{blue!30}{LSTM}  \colorbox{green!30}{feed-forward network}
   \REPEAT
        \STATE Sample batch of J tasks $B = \{ \Tau_j \backsim p(\Tau) \}_{j=1}^J$
        \FOR{$\Tau_j = (D^{tr}_{\Tau_j}, D^{te}_{\Tau_j})$ in $B$}
            \STATE $\boldsymbol{\theta}^{(0)}_j = \boldsymbol{\theta}$
            \FOR{$t = 1,...,T$}
                \STATE Update $\boldsymbol{\theta}^{(t)}_j$ using 
                \colorbox{red!30}{Eq.~\ref{eq:graddescent}}
   \colorbox{blue!30}{Eq.~\ref{eq:lstmupdate}}  \colorbox{green!30}{Eq.~\ref{eq:turtledate}}
            \ENDFOR
        \ENDFOR
        \STATE Update $\boldsymbol{\Theta}$ using $\sum_{\Tau_j \in B} \mathcal{L}_{D^{te}_{\Tau_j}}(\boldsymbol{\theta}^{(T)}_j)$
   \UNTIL{convergence}
\end{algorithmic}
\label{alg:metalearn}
\end{algorithm}

\section{Experiments}\label{sec:experiments}
In this section, we describe our experimental setup and the results that we obtained. 

\subsection{Problem descriptions}
For our experimental evaluation, we use two problems: sine wave regression and image classification.

\begin{figure*}
\vskip 0.2in
\begin{center}
\begin{subfigure}[b]{0.32\linewidth}
\centering
\includegraphics[width=\linewidth]{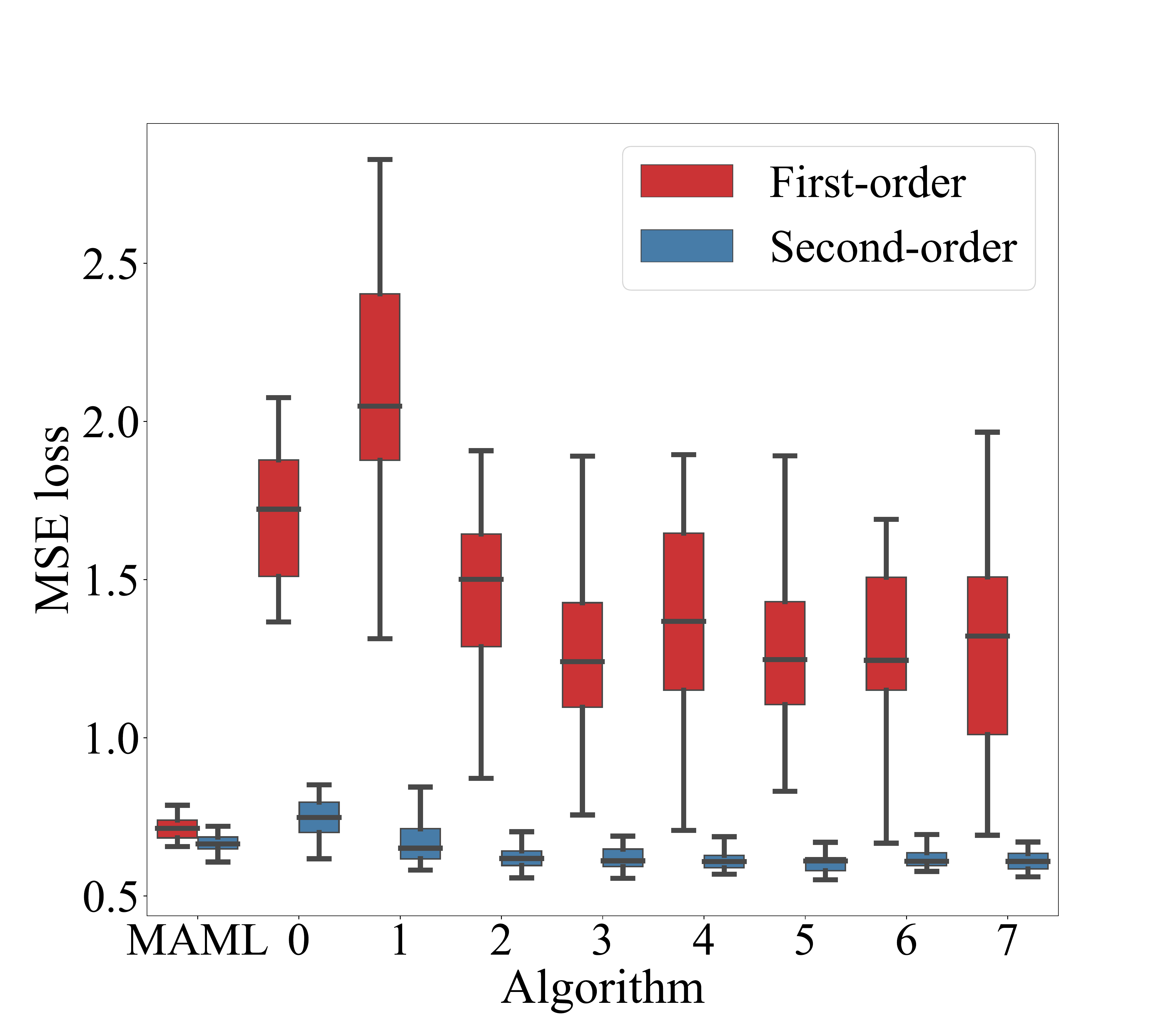}
\caption{$T = 1$}
\end{subfigure}
\begin{subfigure}[b]{0.32\linewidth}
\centering
\includegraphics[width=\linewidth]{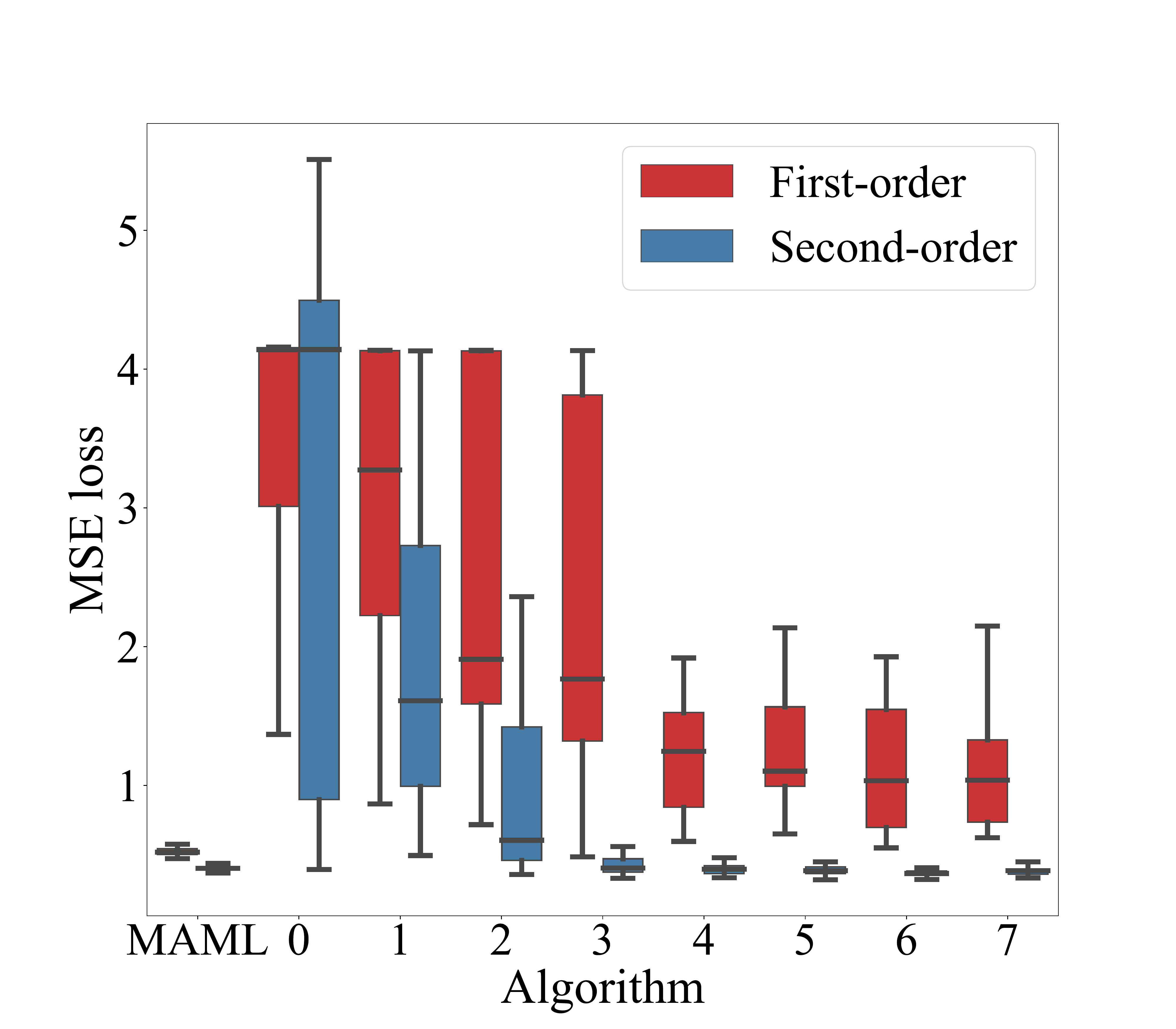}
\caption{$T = 5$}
\end{subfigure}
\begin{subfigure}[b]{0.32\linewidth}
\centering
\includegraphics[width=\linewidth]{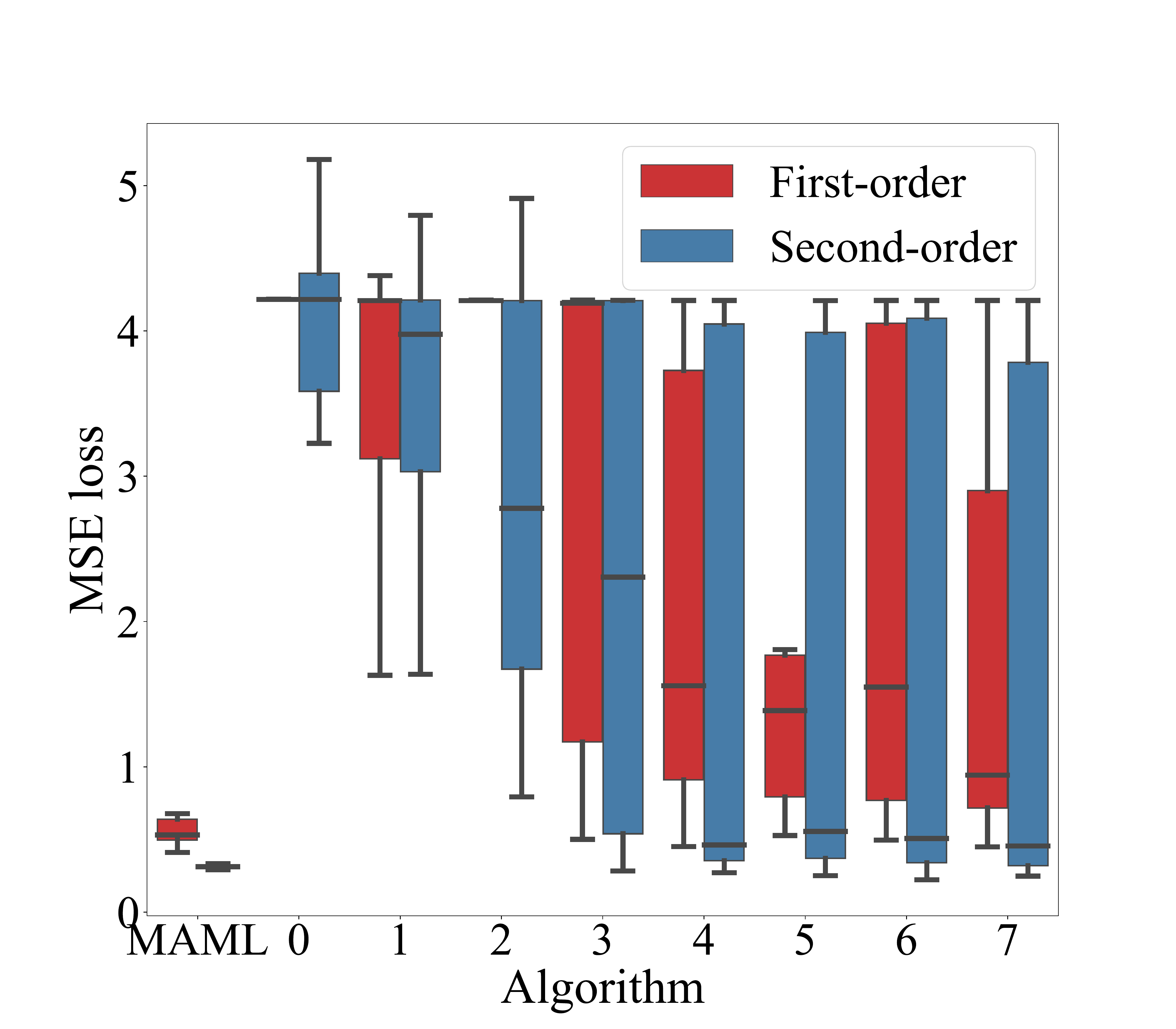}
\caption{$T = 10$}
\end{subfigure}
\caption{Influence of the order, number of update steps, and number of hidden layers (horizontal axis) on the meta-validation performance of TURTLE on $5$-shot sine wave regression. We also plot the performance of first- and second-order MAML for comparison. Note that a lower MSE loss corresponds to better performance.}
\label{fig:layersorder}
\end{center}
\vskip -0.2in
\end{figure*}

\subsubsection{Sine wave regression} 

The sine wave regression  problem was originally proposed by \citet{finn2017model}. 
In this setup, every task $\mathcal{T}_j$ corresponds to a different sine wave function $s_{j}(x) = a \cdot \sin (x - p)$, where $a \in [0.1, 5.0]$ is the amplitude, and $p \in [0, \pi]$ the phase.
Both of these parameters are selected uniformly at random from their corresponding ranges.

Given a task $\mathcal{T}_{j} = (D^{tr}_{\mathcal{T}_j}, D^{te}_{\mathcal{T}_j})$, the goal for the base-learner network is to infer the sine wave that gives rise to observations $(x_{i},y_{i})$ in the support set of a task.
The quality of this inference is determined with the mean squared error (MSE) of the observations from the query set $D^{te}_{\mathcal{T}_j}$.  

We use the same base-learner network as \citet{finn2017model}, i.e., a fully-connected feed-forward network consisting of a single input node followed by two hidden layers with 40 ReLU nodes each and a final single-node output layer.    
We use the MSE loss function to train the model.

In total, we generate $70$K random sine tasks for meta-training, $1$K tasks for meta-validation, and $2$K tasks for meta-testing. 
Every support set $D^{tr}_{\mathcal{T}_j}$ contains $k$ examples, whereas every query set $D^{te}_{\mathcal{T}_j}$ contains $50$ examples to ensure proper evaluation of the inner learning process. 
We perform $30$ runs with different random weight initializations for every experiment and perform meta-validation every $2.5$K tasks.
The best validation performances will be averaged and reported as the validation performance of an algorithm. 
We also include $95$\% confidence intervals. 
The best validation model per run will also be evaluated on the meta-test tasks, for which we adopt a similar evaluation protocol.

\subsubsection{Image classification}

We also use the popular \textit{miniImageNet} benchmark \citep{vinyals2016matching}, with the class splits proposed by \citet{ravi2017optimization}, which were also used by \citet{finn2017model}.
In addition, we also use the CUB benchmark \citep{wah2011caltech}.
Both benchmarks adhere to the $N$-way $k$-shot paradigm described in \autoref{sec:fewshotlearning}.
Following \citet{chen2019closer}, we use $16$ examples per class in every query set.

Moreover, we use the same base-learner network as used by \citet{snell2017prototypical} and \citet{chen2019closer}. 
This network is a stack of four identical convolutional blocks.
Each block consists of $64$ convolutions of size $3 \times 3$, batch normalization, a ReLU nonlinearity, and a 2D max-pooling layer with a kernel size of $2$.
The resulting embeddings of the $84 \times 84 \times 3$ input images are flattened and fed into a dense layer with $N$ nodes (one for every class in a task). 
The base-learner is trained to minimize the cross-entropy loss on the query set, conditioned on the support set.

Following \citet{chen2019closer}, we use $600$ meta-validation tasks, and $600$ meta-test tasks.
Furthermore, when the number of examples per class is $k=1$, we use $60$k meta-training tasks.
For $k=5$, we train on $40$K meta-training tasks. 
We perform $5$ runs with different random weight initializations for every experiment and, as with sine wave regression, we perform meta-validation every $2.5$K tasks.
The same evaluation protocol as used for sine wave regression applies, with the exception that we are now interested in the accuracy instead of the MSE loss.

Importantly, we investigate two main scenarios.
In the first scenario, the \textit{within-distribution} setting, the techniques are evaluated on tasks from the same data set that was used for meta-training (e.g., train on miniImageNet and evaluate on miniImageNet).
In the second scenario, the \textit{out-of-distribution} setting proposed by \citet{chen2019closer}, the techniques are evaluated on tasks from a different data set (CUB) than the one used for meta-training (miniImageNet).

\subsection{Hyperparameter optimization}

\begin{table*}[!h]
\caption{Median meta-test accuracy scores and $95$\% confidence intervals over 5 runs of $5$-way image classification on miniImageNet (left) and CUB (right). The best performance is displayed in bold font. Note that a higher accuracy indicates better performance.}
\label{tab:image}
\vskip 0.15in
\begin{center}
\begin{small}
\begin{sc}
\begin{tabular}{@{\extracolsep{4pt}}lllll}
\toprule
& \multicolumn{2}{c}{miniImageNet} & \multicolumn{2}{c}{CUB} \\
\cline{2-3} \cline{4-5}
{}                          & 1-shot & 5-shot & 1-shot & 5-shot \\ 
\midrule
TrainFromScratch                         & 0.29 $\pm$ 0.00    & 0.40 $\pm$ 0.00    & 0.30 $\pm$ 0.00    & 0.46 $\pm$ 0.00    \\
Finetuning                  & 0.38 $\pm$ 0.00    & 0.56 $\pm$ 0.00    & 0.33 $\pm$ 0.01    & 0.53 $\pm$ 0.01    \\
Baseline++                  & 0.44 $\pm$ 0.00    & 0.58 $\pm$ 0.00    & 0.36 $\pm$ 0.01    & 0.53 $\pm$ 0.01    \\
Meta-learner LSTM                  & 0.45 $\pm$ 0.01    & 0.61 $\pm$ 0.00    & 0.50 $\pm$ 0.00    & 0.65 $\pm$ 0.01    \\
Meta-learner LSTM\tablefootnote{\label{fn:lstm}Our enhanced version of the meta-learner LSTM, which takes raw gradients as inputs, uses second-order gradients, and makes $8$ updates per task.} & \textbf{0.48} $\pm$ 0.01 & 0.63 $\pm$ 0.01 & \textbf{0.53} $\pm$ 0.01 & 0.71 $\pm$ 0.00 \\
MAML       & 0.47 $\pm$ 0.01    & 0.63 $\pm$ 0.00    & 0.52 $\pm$ 0.00    & \textbf{0.73} $\pm$ 0.01    \\
TURTLE & \textbf{0.48} $\pm$ 0.01    & \textbf{0.64} $\pm$ 0.01    & \textbf{0.53} $\pm$ 0.00    & 0.72 $\pm$ 0.01    \\
\bottomrule
\end{tabular}
\end{sc}
\end{small}
\end{center}
\vskip -0.1in
\end{table*}

First, we investigate the effect of the order of information (first- versus second-order), the number of updates $T$ per task, and further increasing the number of layers of the meta-network on the performance of TURTLE on sine wave regression.
The results are displayed in \autoref{fig:layersorder}.
Note that in this experiment, we fixed the learning rate vector $\boldsymbol{\alpha}$ to be a vector of ones, which means that the updates proposed by the meta-network are directly added to the base-learner parameters without any scaling.
Moreover, the only input that the meta-network receives is the gradient of the loss on the support set with respect to a base-learner parameter, and every hidden layer of the meta-network consists of 20 nodes followed by ReLU nonlinearities.

As we can see, the difference between first- and second-order MAML is relatively small, which was also found by \citet{finn2017model}.
In contrast, this is not the case for TURTLE, where the first-order variant fails to achieve a similar performance as second-order TURTLE. 
Furthermore, we see that the stability of TURTLE decreases as $T$ increases. 
Lastly, we find that $5$ or $6$ hidden layers yield the best performance across different values of $T$.
For this reason, all further TURTLE experiments will be conducted with a meta-network of $5$ hidden layers.

In an attempt to stabilize TURTLE and further improve the performance, we experimented with the inclusion of additional input information (\autoref{sec:metanetwork}) and larger meta-batches.
This was done through individual grid searches on top of the meta-network with 5 hidden layers.
The used grids are displayed in \autoref{tab:grids}.  
The best settings were: $(\mathit{no}, \mathit{gradients}, \mathit{0}, \mathit{1} )$, $(\mathit{yes}, \mathit{gradients}, \mathit{0.9}, \mathit{2})$, and $(\mathit{yes}, \mathit{gradients}, \mathit{0.3}, \mathit{4})$ for $T=1$, $T=5$, and $T=10$ updates per task, respectively.

\begin{table}[h]
    \caption{The hyperparameter grids that we used for tuning TURTLE on sine wave regression.}
    \label{tab:grids}
    \vskip 0.1in
    \begin{center}
    \begin{small}
    \begin{sc}
    \begin{tabular}{l|l}
    \toprule 
        Hyperparameter & Grid  \\
        \midrule 
        Time input & yes, no \\
        History & gradients, updates \\
        Decay factor ($\beta$) &  $0, 0.1, 0.2, 0.3, 0.4, 0.6, 0.8, 0.9, 0.95$ \\
        Meta-batch size &  $ 1, 2, 4, 8, 16, 32, 64 $ \\
        \bottomrule
    \end{tabular}
    \end{sc}
    \end{small}
    \end{center}
    \vskip -0.1in
\end{table}

In \autoref{fig:valperf}, we compare the $5$-shot meta-validation performances of these tuned TURTLE models with those of MAML and the meta-learner LSTM (for which we used the hyperparameters reported by the original authors).
As we can see, TURTLE outperforms both MAML and the meta-learner LSTM. 
Lastly, we note that $5$-step TURTLE achieves the best performance. 

\begin{figure}[htb]
\vskip -0.12cm
\begin{center}
\centerline{\includegraphics[width=\columnwidth]{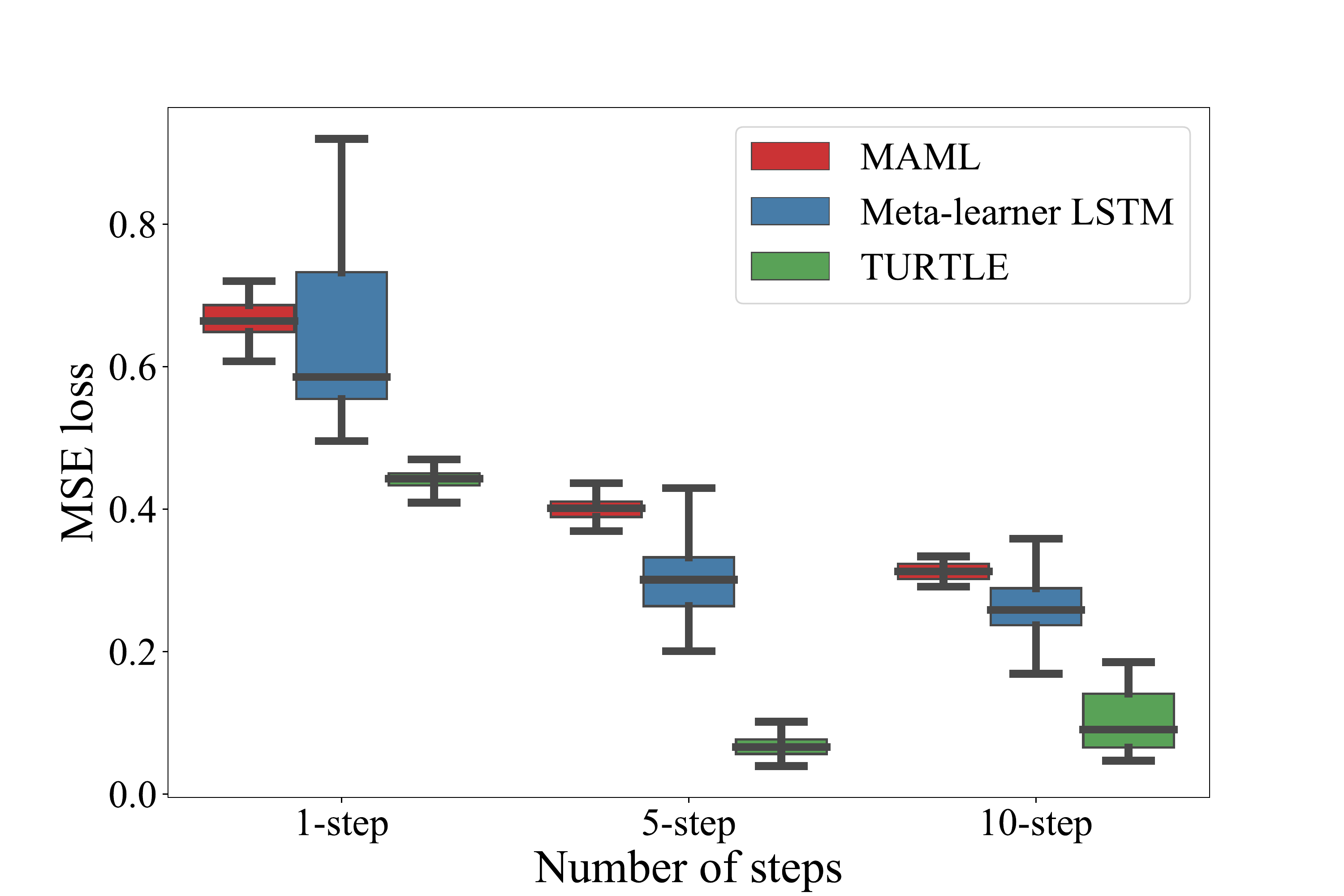}}
\caption{Meta-validation performance of MAML, the meta-learner LSTM, and TURTLE on $5$-shot sine wave regression. Note that a lower error indicates better performance.}
\label{fig:valperf}
\end{center}
\vskip -0.2in
\end{figure}

\subsection{Image classification results}

\textit{Without} additional hyperparameter tuning, we now investigate the performance of 5-step TURTLE on image classification tasks. 
An overview of the  hyperparameters that were used for all techniques can be found in the supplementary material.
We compare the performance against three simple transfer-learning models, following \citet{chen2019closer}: train from scratch, finetuning, and baseline++.
The meta-test accuracy scores on $5$-way miniImageNet and CUB classification are displayed in \autoref{tab:image}.
Note that we use the best-reported hyperparameters for MAML and the meta-learner LSTM on miniImageNet, while we use the best hyperparameters found on sine wave regression for TURTLE.
Based on our hyperparameter experiments for TURTLE, we also investigate an enhanced version of the meta-learner LSTM which uses raw gradients as meta-learner input and second-order information.
On CUB, we use the same hyperparameters as on miniImageNet. 

As we can see, the performances of all models are better on $5$-shot classification compared with $1$-shot classification.
Looking at the results for miniImageNet, we see that TURTLE yields the best performance in three settings. 
On the CUB data set, however, TURTLE is outperformed by MAML in the 5-shot setting, while the reverse holds in the 1-shot setting. 
Moreover, we see that our enhanced version of the meta-learner LSTM achieves better performance than the original meta-learner LSTM, and is on par with TURTLE in two settings.

\subsection{Cross-domain performance and time complexity}

We also investigate the robustness of the meta-learning algorithms when a task distribution shifts occurs.
For this, we train the techniques on miniImageNet and evaluate their performance on CUB tasks, following \citet{chen2019closer}.
The results are shown in \autoref{tab:cross}.
As we can see, TURTLE also achieves the best performance in this challenging scenario.
Furthermore, we see that our enhanced meta-learner LSTM achieves better performance than the original, especially in the 5-shot setting. 

\begin{table}[thb]
\caption{Median meta-test accuracy scores on $5$-way CUB after being trained on miniImageNet. The median accuracy and $95$\% confidence intervals were computed over 5 runs. The meta-learner LSTM$^*$ refers to our enhanced version of the meta-learner LSTM, which takes raw gradients as inputs, uses second-order gradients, and makes $8$ updates per task.}
\label{tab:cross}
\vskip 0.15in
\begin{center}
\begin{small}
\begin{sc}
\begin{tabular}{lcccc}
\toprule
 & 1-shot & 5-shot \\ 
\midrule
TrainFromScratch        & 0.30 $\pm$ 0.00  & 0.46 $\pm$ 0.00 \\
Finetuning & 0.33 $\pm$ 0.00  & 0.52 $\pm$ 0.00  \\
Baseline++ & 0.35 $\pm$ 0.00  & 0.52 $\pm$ 0.00  \\
Meta-learner LSTM & 0.35 $\pm$ 0.00  & 0.51 $\pm$ 0.01  \\
Meta-learner LSTM\footnotemark[2] & 0.36 $\pm$ 0.00  & 0.56 $\pm$ 0.01  \\
MAML       & 0.39 $\pm$ 0.00  & 0.58 $\pm$ 0.00 \\
TURTLE     & \textbf{0.42} $\pm$ 0.01    & \textbf{0.59} $\pm$ 0.01  \\
\bottomrule
\end{tabular}
\end{sc}
\end{small}
\end{center}
\vskip -0.1in
\end{table}

Lastly, we compare the running times of MAML, the meta-learner LSTM, and TURTLE on miniImageNet and CUB. 
A run comprises the time it costs to perform meta-training, meta-validation, and meta-testing on miniImageNet, and evaluation on CUB. 
We measure the average time in full hours across $5$ runs on nodes with a Xeon Gold 6126 2.6GHz 12 core CPU and PNY GeForce RTX 2080TI GPU.
The results are displayed in \autoref{tab:time}.
As we can see, the first-order algorithms (fo-MAML and the meta-learner LSTM) are the fastest, while the second-order algorithms are slower (so-MAML and TURTLE).
Note that the original meta-learner LSTM is slower at 1-shot learning compared with 5-shot learning due to the fact that it makes 12 update steps per task in the former and only 5 in the latter. 
TURTLE is, despite its name, not much slower than the other second-order approach (so-MAML), indicating that the time complexity is dominated by learning the base-learner initialization parameters. 

\begin{table}[!tb]
\caption{Average running time in hours of $5$-step MAML, the meta-learner LSTM, and TURTLE on $5$-way miniImageNet and CUB. The standard deviations across the five runs are given by $\pm x$.}
\label{tab:time}
\vskip 0.15in
\begin{center}
\begin{small}
\begin{sc}
\begin{tabular}{lrr}
\toprule
Algorithm & $1$-shot & $5$-shot \\
\midrule
First-order MAML &  $3 \pm 0$  &  $3.7 \pm 0$  \\
Meta-learner LSTM  &  $4.5 \pm 0.01$  &   $3.90 \pm 0.01$ \\
Meta-learner LSTM\footnotemark[2]   & $8.6 \pm 0.01 $ & $12.3 \pm 0.02$   \\
Second-order MAML &  $5.8 \pm 0.06$  &  $7.9 \pm 0.02$ \\
TURTLE  &  $6 \pm 0.06$  & $8.1 \pm 0.16$ \\
\bottomrule
\end{tabular}
\end{sc}
\end{small}
\end{center}
\vskip -0.2in
\end{table}

\section{Discussion and future work}\label{sec:discuss}

In this work, we have formally shown that the meta-learner LSTM \citep{ravi2017optimization} subsumes MAML \citep{finn2017model}. 
Experiments of \citet{finn2017model} and ourselves, however, show that MAML outperforms the meta-learner LSTM.
We formulated two hypotheses for this surprising finding and, in turn, we formulated a new meta-learning algorithm named TURTLE, which is simpler than the meta-learner LSTM as it is stateless, yet more expressive than MAML because it can learn the weight update rule as it features a separate meta-network.

We empirically demonstrate that TURTLE outperforms both MAML and the meta-learner LSTM on sine wave regression and---without additional hyperparameter tuning---on the frequently used miniImageNet benchmark.
This shows that better update rules exist for fast adaptation than regular gradient descent, which is in line with findings by \citet{andrychowicz2016learning}.

Our hyperparameter analysis on sine wave regression shows that second-order gradients are crucial for achieving good performance with TURTLE. 
In contrast, first-order MAML is a good approximation to second-order MAML as it yields similar performance \citep{finn2017model}.
This finding highlights the distinction between the base- and meta-level goals. 
On the base-level, we wish to find an initialization that is close to the optimal parameters for tasks $\mathcal{T}_j \backsim p(\mathcal{T})$.
Assuming reasonable local smoothness of the base-level landscape, we could ignore our optimization trajectory $\boldsymbol{\theta}_j^{(0)} \rightarrow \boldsymbol{\theta}_j^{(1)} \rightarrow \ldots \rightarrow \boldsymbol{\theta}_j^{(T)}$ as the gradient of the query loss at $\boldsymbol{\theta}_j^{(T)}$ will presumably point towards the direction of the optimal parameters for task $\mathcal{T}_j$, and hence we can move the initialization $\boldsymbol{\theta}$ in that direction. 
On the meta-level, in contrast, we wish to learn an optimization strategy,  which is sequential in nature. 
This implies that an update at time step $t$ influences the gradient inputs that the meta-network will receive at time steps $t < t' < T$.
The consequence of ignoring second-order gradients, as done by the meta-learner LSTM, is that the computation graph becomes disconnected, which makes the meta-network unaware that an update at time step $t$ influences the inputs at future time steps $ t < t' < T$. 

Moreover, we enhanced the meta-learner LSTM by using raw gradients as meta-learner input and second-order information, as they were found to be important for TURTLE. 
Our results indicate that this enhanced version of the meta-learner LSTM systematically outperforms the original technique by $1-6$\% accuracy. 
A promising direction for future work may be to investigate why TURTLE has a slight edge compared with the enhanced meta-learner LSTM.
Another---somewhat related---direction would be to further enhance the meta-learner LSTM, for example, by using meta-batches, as that also increased the training stability of TURTLE.
As a side note, we believe that the performance of TURTLE could also be improved on miniImageNet and CUB by performing hyperparameter tuning on these specific data sets. 

While TURTLE and the enhanced meta-learner LSTM were shown to yield good performance, it has to be noted that this comes at the cost of increased computational expenses compared with first-order algorithms.
That is, these second-order algorithms perform backpropagation through the entire optimization trajectory which requires storing intermediate updates and the computation of second-order gradients. 
While this is also the case for MAML, it has been shown that first-order MAML achieves a similar performance whilst avoiding this expensive backpropagation process.
For TURTLE, however, this is not the case, which means that other approaches should be investigated in order to reduce the computational costs. 
Future research  may draw inspiration from \citet{rajeswaran2019meta} who approximated second-order gradients in order to speed up MAML.

Successfully using meta-learning algorithms in scenarios where task distribution shifts occur remains an important open challenge in the field of meta-learning.
Our cross-domain experiment demonstrates that the learned optimization procedure by TURTLE generalizes to different tasks than the ones seen at training time, which is in line with findings by \citet{andrychowicz2016learning}. 
For this reason, we think that learned optimizers may be an important piece of the puzzle to broaden the applicability of meta-learning techniques to real-world problems. 
Future work can further investigate this hypothesis.

In short, our findings show the benefit of learning an optimizer in addition to the initialization weights and highlight the importance of second-order gradients.

\section*{Acknowledgements}
This work was performed using the compute resources from the Academic Leiden Interdisciplinary Cluster Environment (ALICE) provided by Leiden University.

\bibliography{example_paper}
\bibliographystyle{icml2020}


\end{document}